\newglossaryentry{latex}
{
    name=latex,
    description={Is a mark up language specially suited 
    for scientific documents}
}
\newglossaryentry{maths}
{
    name=mathematics,
    description={Mathematics is what mathematicians do}
}
\newglossaryentry{isomorphic}
{
    name=isomorphic,
    description={a}
}
\newglossaryentry{faithful}
{
    name=faithful,
    description={}
}
\newglossaryentry{group}
{
    name=group,
    description={}
}
\newglossaryentry{subdir}
{
    name=sub-direct product,
    description={pairing}
}
\newglossaryentry{relation}
{
    name=relation,
    description={A relation between two sets is a collection of ordered pairs containing one object from each set}
}
\newglossaryentry{homogeneous space}
{
    name=homogeneous space,
    description={If the action of $G$ on $\mathcal{X}$ is transitive, we say that $X$ is a homogeneous space of $G$}
}
\newglossaryentry{transitive}
{
    name=transitive,
    description={Transitivity is the property that taking any $x_0 \in \mathcal{X}$, any other $x\in \mathcal{X}$ can be reached by the action of some $g \in G$, i.e., $x= g(x_0)$}
}
\newglossaryentry{corollary}
{
    name=corollary,
    description={a less important theorem in which the (usually short) proof relies heavily on a given more significant theorem. It is often stated that: “this is a corollary of Theorem X”}
}
\newglossaryentry{regularity}
{
    name=regularity,
    description={Look at definition of semi-regular(free) and regular}
}
\newglossaryentry{semi-regular}
{
    name=semi-regular,
    description={A group action of a group on set is called semi-regular or free if for any two elements in the set, there is at most one element of the group that transfers the first element to the second.}
}
\newglossaryentry{regular}
{
    name=regular,
    description={A group action is regular if and only if it is both transitive and free}
}
\newtheorem{theorem}{Theorem}
\newtheorem{definition}[theorem]{Definition}
\title{Circular-Symmetric Correlation Layer based on FFT}
\author{%
  Bahar Azari \\
  {\footnotesize Department of Electrical \& Computer Engineering}\\
  Northeastern University, USA\\
  Boston, MA 02115 \\
  \texttt{azari.b@northeastern.edu} \\
  \And
  Deniz Erdo\u{g}mu\c{s} \\
  {\footnotesize Department of Electrical \& Computer Engineering}\\
  Northeastern University, USA\\
  Boston, MA 02115 \\
  \texttt{Erdogmus@ece.neu.edu} \\
}
\begin{document}

\maketitle

\begin{abstract}
Despite the vast success of standard planar convolutional neural networks, they are not the most efficient choice for analyzing signals that lie on an arbitrarily curved manifold, such as a cylinder. The problem arises when one performs a planar projection of these signals and inevitably causes them to be distorted or broken where there is valuable information. We propose a Circular-symmetric Correlation Layer (CCL) based on the formalism of roto-translation equivariant correlation on the continuous group $S^1 \times \mathbb{R}$, and implement it efficiently using the well-known Fast Fourier Transform (FFT) algorithm. We showcase the performance analysis of a general network equipped with CCL on various recognition and classification tasks and datasets. The PyTorch package implementation of CCL is provided online. 
\end{abstract}

\section{Introduction}
Planar convolutional neural networks, widely known as CNNs, which have been exceptionally successful in many computer vision and machine learning tasks, such as object detection, tracking, and classification, are characterized by pattern-matching filters that can identify motifs in the signal residing on a 2D plane. However, there exists various applications in which we have signal lying on a curved plane, e.g., temperature and climate data on the surface of the (spherical) earth, and $360^\circ-$panoramic images and videos from surveillance cameras to name a few.  
\begin{wrapfigure}[20]{r}{0.45\textwidth}
  \begin{center}
  \includegraphics[width=0.45\textwidth]{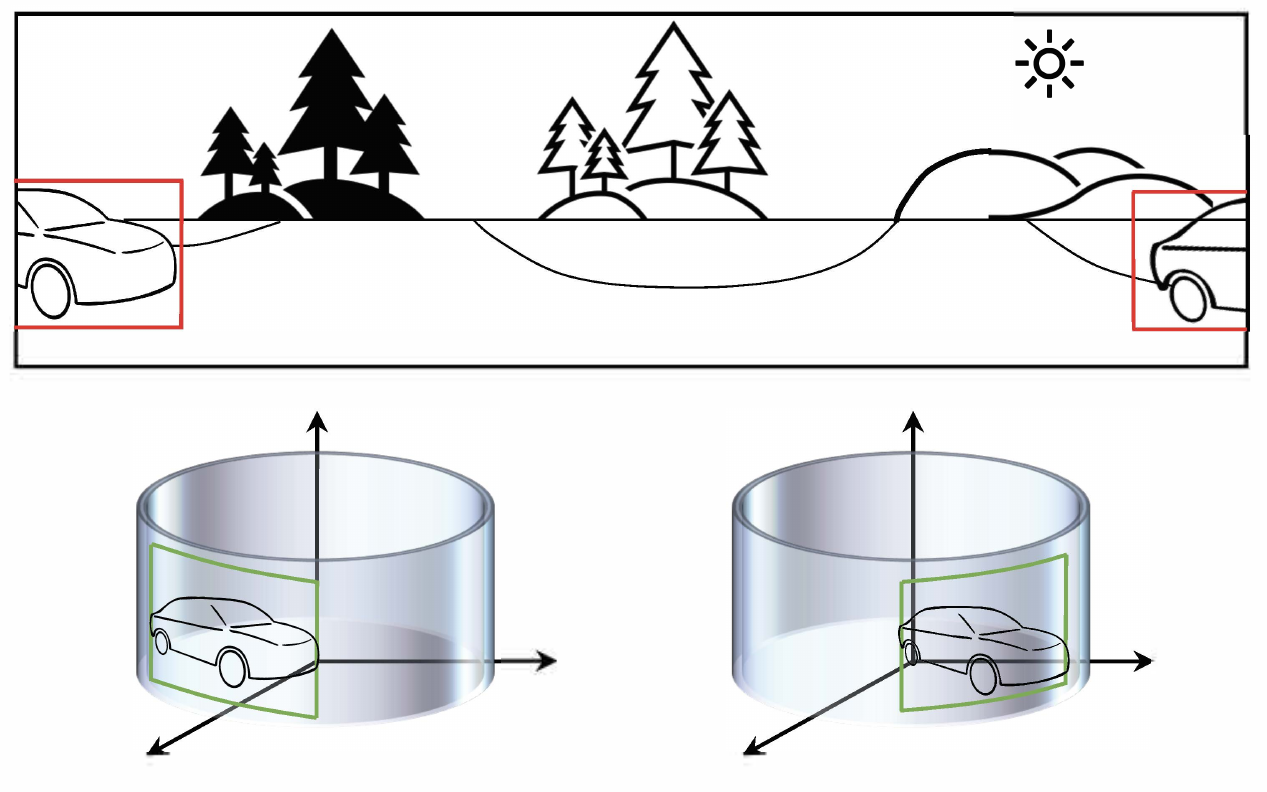}
  \end{center}
  \caption{Object breakage in $360^\circ$-panoramic image unwrapping. \textbf{Top:} The car has been subjected to image cut. \textbf{Bottom:} Cognition tasks should be invariant to shifting of the object on the surface the cylinder.}
  \label{fig:cyCut}
\end{wrapfigure}
Analyzing signals in the above-mentioned applications is achievable by using the planar projection of the signals. In the case of the $360^\circ-$panoramic data, which is the interest of this study, the images are usually unwrapped to a standard 2D image to be treated as input feature map. However, the resulting arbitrary breakage of the signal may be destructive in object-detection tasks (see figure.~\ref{fig:cyCut}). Furthermore, a commonly negligible drawback of CNNs becomes important in this scenario. A convolution kernel produces a single value information for a region in the image that is covered with the kernel at specific shift. The information at the border are however neglected since the kernel shift needs to stop at a margin equal to half the size of the kernel. This is detrimental for panoramic image processing because potential valuable information in the border of the image (The car in figure.~\ref{fig:cyCut}). Even the zero-padding techniques, applied to the out-of-image regions, introduce increasing distortion growing from the border towards the interior as we go deeper in the CNN. There are other proxy techniques that try to alleviate the border information loss problem, such as input
padding (see \cite{shi2015deeppano}), or creating more training data using multiple circular shifted versions of the original data (see \cite{lo2002multiple}). The former increases computational time and memory consumption and the latter increases the training time and could have adversary effect on the kernels' representational capacity as it exposes them to more and more border area.

Another limitation of CNNs that becomes noticeable in the case of analyzing panoramic data is related to what is known as \textit{invariance} and \textit{equivariance} property of neural network as a function. For defining these properties, we consider a family, or a ``group'', of transformations (e.g., rotations, or translations) of input and output to a given layer of the neural network. The elements of the group can ``act'' on the input and output of each layer in some specific way. The neural network is invariant to the action of the group, if transformations of the input does not change the output. Otherwise, it is equivariant if, as we transform the input, the output is transformed according to another certain action of the group. The convolution layers are empirically known to be invariant to small translations of their input image, but they are not completely immune to relatively large rotations (\cite{goodfellow2009measuring, schmidt2012learning, he2015spatial, lenc2015understanding, jaderberg2015spatial, cohen2016group, dieleman2016exploiting}). Hence, they may fail on the tasks that requires invariance to a specific transformation, and and on the data that includes a wide range of that transformation. Figure.~\ref{fig:cyCut}(Bottom) shows an example of this phenomenon. The object (car) identification task should be invariant to rotation around the $z$-axis.

\par 
Nevertheless, the building block of CNN, i.e. convolution or cross-correlation, has the potential equivariance property that can be exploited to construct a network suitable for (horizontal) translation-invariant task, such as object detection (figure.~\ref{fig:cyCut}). Therefore, for a systematic treatment of analyzing the $360^\circ-$panoramic data, we propose a circular-symmetric correlation Layer (CCL) based on the formalism of roto-translation equivariant correlation on the continuous group $S^1 \times \mathbb{R}$ -- a group constructed of the unit circle and a the real line.

\par
We implement this layer efficiently using the well-known Fast Fourier Transform (FFT)  and discrete cosine transform (DCT) algorithm. We discuss how the FFT yields the exact calculation of the correlation along the panoramic direction due to the circular symmetry and guarantees the invariance with respect to circular shift. The DCT provides a improved approximation with respect to transnational symmetry compared to what we observe in CNNs.    

\par
We showcase the performance analysis of a general network equipped with CCL on various recognition and classification tasks and datasets. The PyTorch package implementation of CCL is provided online.

Our contributions are as follows:
\begin{enumerate}
    \item Theoretical definition of circular-symmetric correlation on the surface of a cylinder. 
    \item Efficient implementation of CCL based on FFT and DCT.
    \item Experimental analysis that shows the outperformance of neural networks equipped with CCL. 
\end{enumerate}

\section{Related Work}
The outstanding ability of CNN in processing spatially and temporally correlated signals comes from the fact that it exploits the transnational symmetry and the equivariance property of its correlation layers. In fact, correlation layers in CNN are equivariant to moderate translations. In other words, a trained pattern should be able to detect a particular signal at a specific location in the image, independent of this location. Due to this powerful property, recently, there has been increasing attempt to generalize the idea of CNN to other spaces, and symmetry group \citep{gens2014deep, olah2014groups,dieleman2015rotation,guttenberg2016permutation,dieleman2016exploiting,cohen2016steerable, ravanbakhsh2016deep,zaheer2017deep,ravanbakhsh2017equivariance,worrall2017harmonic,maron2020learning,dym2020universality}.    

Most of these studies focus on discrete groups. For example, the investigation of discrete $90^\circ$ rotations acting on planar images in the work of  \cite{cohen2016group}, permutations of nodes in graphs in \cite{maron2018invariant}, or permutations of points in the point cloud in \cite{zaheer2017deep}. Recent works \citep{cohen2018spherical,cohen2019general} have been investigating equivariance to continuous groups and generalized the CNN to various spaces. \citep{kondor2018generalization, cohen2018spherical} use the generalized Fourier transform for group correlation and provided a formalism to efficintly implement these layers. Circular symmetry, which is the interest of this paper, has also been empirically studied in \citep{schubert2019circular,papadakis2010panorama,kim2020cycnn}, but non of these works addressed the issue in a formal analytic way.

\section{Circular-Symmetric Correlation Layer}
In order to learn a function that predicts a quantity based on a spatially-correlated signal, such as an image, we need to perform cross-correlation (correlation, in short), namely, we slide a kernel (filter) \textit{throughout} the signal and measure their similarity. We have the familiar case of a classical planar $\mathbb{R}^2$ correlation, in which the output value at translation $x\in\mathbb{R}^2$ is computed as an inner product between the input and a kernel, translated to $x$. However, correlation is not limited to the signals on $\mathbb{R}^2$, and in our case we are interested in images on the surface of a cylinder. We begin our discussion by introducing the correlation on the surface of a cylinder. To do so, we first start with defining the mathematical building blocks.
\subsection{Preliminaries and Notation}
\paragraph{Cylinder} We consider the lateral surface of a cylinder, a manifold, which is constructed by the combination of two other manifolds -- a circle and a line segment\footnote{It is either an infinite line, or a line segment without its end points which is also a manifold.}. The unit circle $S^1$, defined as the set of points $z \in\mathbb{R}^2$ with norm $1$, is a one-dimensional manifold that can be parameterized by polar coordinate $\varphi \in \left[0,2\pi\right]$. Cartesian product of $S^1$ with a line $\mathbb{R}$ (or, a line segment $(-a,a)$) constructs a two-dimensional manifold, known as a cylinder $\mathbb{X} = S^1 \times \mathbb{R}$ (or, $S^1 \times (-a,a)$ in case of having a line segment). We characterize the set of points on the lateral surface of the cylinder by cylindrical coordinates $\varphi \in \left[0,2\pi\right]$ and $z\in\mathbb{R}$, and define circular-symmetric signals and convolution kernels as continuous functions on this surface $f:\mathbb{X}\mapsto\mathbb{R}^K$, where $K$ is the number of channels.

\paragraph{Rotation and Translation on Cylinder surface} The set of rotations around and translations along the $z$-axis is a subgroup of $\text{SE}(3)$, the ``special Euclidean group'', denoted as $\mathcal{G}\leq \text{SE}(3)$ and is isomorphic to $\mathbb{X}$, i.e., $\mathcal{G}=S^1 \times \mathbb{R}$. The action of an element $\xi$ in $\mathcal{G}$ is a pair $(R_{\psi}, \nu)$, where $R_{\psi}$ belongs to a subgroup of the ``special orthogonal group'' $\text{SO}(3)$ representing a rotation by $\psi$ around $z$-axis, and a translation by $\nu\in\mathbb{R}$ along $z$-axis. The representation of $\mathcal{G}$ corresponds to the set of all $ 4 \times 4$ transformation matrices of the form
\begin{equation}
    \mathcal{G}=\left\{
    {\scriptsize
    \begin{pmatrix}
         &  &  & 0\\
         \multicolumn{3}{c}
        {\raisebox{\dimexpr\normalbaselineskip-.8\ht\strutbox-.5\height}[0pt][0pt]
        {\scalebox{2}{$R_{\psi}$}}} & 0 \\
         &  &  & \scalebox{1.3}{$\nu$} \\
        0 & 0 & 0 & 1
    \end{pmatrix}
    } 
    \middle| \psi \in [0,2\pi]~\text{and}~ \nu \in \mathbb{R} \right\}, 
    \label{eq:SEgroup}
\end{equation}
where $R_{\psi}$ is a 3D rotation matrix. The specific form of transforming filters and functions on the cylindrical surface, which we consider in this study, corresponds to applying the roto-translation operator $L_{\xi}$ which takes a function $f:\mathbb{X}\mapsto\mathbb{R}^K$ and produces a shifted version by rotating it around and translating it along the $z$-axis as (see figure.~\ref{fig:cyCut}, bottom: rotation of the car around the $z$-axis.):

\begin{equation}
    [L_{\xi}f](x) = f(\xi^{-1}x).  
    \label{eq:operator}
\end{equation}

As we explained earlier, since $\mathcal{G}$ is a group and groups contain inverses, for $\xi, \xi' \in \mathcal{G}$ we have $L_{\xi\xi'} = L_{\xi}L_{\xi'}$. We can show this using inverse and associative property of groups as:
\begin{equation}
    [L_{\xi\xi'}f]\left(x\right) = 
    f\left((\xi\xi')^{-1}x\right) = 
    f\left(\xi'^{-1}(\xi^{-1}x)\right) = 
    [L_{\xi'}f]\left(\xi^{-1}x\right) = 
    [L_{\xi}L_{\xi'}f]\left(x\right).
    \label{eq:prop}
\end{equation}

\subsection{Correlation on Cylinder}
To define the correlation we begin with the familiar definition of inner product. We define the inner product on the vector space of cylindrical signals as:
\begin{equation}
    \langle f,h\rangle = \int_{\mathbb{X}} \sum_{k=1}^{K} f_k(x)\,h_k(x)dx,
    \label{eq:inner}
\end{equation}

where the integration measure $dx$ denotes the Haar measure (invariant integration measure) on the lateral surface of the cylinder and it is equal to  $d\varphi dz$ in cylindrical coordinate. Due to the invariance of the measure, the value of the integral of a function affected by any $\xi \in \mathcal{G}$ remains the same, namely, $\int_{\mathbb{X}}f(\xi x)dx = \int_{\mathbb{X}}f(x)dx$ for all $\xi\in \mathcal{G}$. Using the inner product in (\ref{eq:inner}), we define the correlation of signals and filters on the surface of the cylinder. Given a point on the cylinder surface $x \in \mathbb{X}$, a transformation on the subgroup of $\text{SE}(3)$,  $\xi \in \mathcal{G}$, and functions $f(x)$ and $h(x)$ the correlation is defined as:

\begin{equation}
    [f \star h](\xi) = \langle L_{\xi}f,h\rangle = \int_{\mathbb{X}} \sum_{k=1}^{K} f_k(\xi^{-1}x)h_k(x)dx.
    \label{eq:corr}
\end{equation}

Note that the correlation in (\ref{eq:corr}) is also equivalent to $\langle f,L_{\xi^{-1}}h\rangle$ as the value of the correlation at a shift $\xi$ is equal to the inner product of $f$ and $h$, where either $f$ is shifted by $\xi$, or $h$ is shifted by the inverse of $\xi$ ( $\xi^{-1}$). Therefore, if we express the point $x$ as $x = (\varphi, z)$, the transformation as  $\xi = (\psi, \nu)$, and the Haar measure as $dx = d\varphi dz$, the correlation in (\ref{eq:corr}) can be rewritten as:
\begin{equation}
    [f \star h](\xi) = \langle L_{\xi}f,h\rangle = \int_{\mathbb{R}}\int_{0}^{2\pi} \sum_{k=1}^{K} f_k(\varphi-\psi,z-\nu)\,h_k(\varphi-\psi,z-\nu)d\varphi dz,
    \label{eq:corr2}
\end{equation}

where the integral with respect to $\varphi$ is the circular cross-correlation. It is worthwhile to mention that the resulting correlation function lies on the group $\mathcal{G}$ which is isomorphic to the space $\mathbb{X}$ that the initial functions have lied on, namely $S^1 \times \mathbb{R}$.   

\subsection{Equivariance of Correlation Layers}
For the correlation in (\ref{eq:corr2}), which is defined in terms of operator $L_\xi$, i.e., rotation $\varphi$ around and translation $\nu$ along $z$-axis, we can show the important equivariance property known for all convolution and correlation layers. We express mathematically what we informally stated earlier. 
\par\textbf{Group actions:} For a set of points $\mathbb{X}$, we have a group $\mathcal{G}$ that acts on $\mathbb{X}$. This means that for each element $\xi \in \mathcal{G}$, there exist a transformation $T_{\xi}:\mathbb{X}\rightarrow\mathbb{X}$ corresponding to group action $x \mapsto T_{\xi}(x)$. We showed this simply as $\xi x$ to simplify notation. As we have seen earlier, the action of $\mathcal{G}$ on $\mathbb{X}$ extends to functions on $\mathbb{X}$ (induced action) and that is what we have denoted as the operator  $L_{\xi}: f \mapsto f'$ which is $f'(x) = [L_{\xi}f](x) = f(\xi^{-1}x)$.

\textbf{Equivariance:} Equivariance is the potential property of a map between functions on a pair of spaces with respect to a group acting on these spaces through the group action.  

\begin{definition}
Let $\mathbb{X}_1$, $\mathbb{X}_2$ be two sets with group $\mathcal{G}$ acting on them. Consider $V_1$ and $V_2$ as the corresponding vector spaces of functions defined on the sets, and $L_\omega$ and $L_\omega'$ as the induced actions of $\mathcal{G}$ on these functions. We say that a map $\Phi: V_1 \rightarrow V_2$ is $\mathcal{G}$–equivariant if 
\begin{equation*}
    \Phi(L_\omega(f)) = L_\omega'(\Phi(f))\quad \forall f\in V_1,~\forall\omega \in \mathcal{G}.
\end{equation*}
\end{definition}

Considering that the map in our case corresponds to the cross-correlation function we have defined on the cylindrical surface in (\ref{eq:corr}), its equivariance with respect to the action of group $\mathcal{G} = S^1 \times \mathbb{R}$ can be demonstrated as following:

\begin{theorem}
Cross-correlation on lateral surface of a cylinder is equivariant to the  action of the group $S^1 \times \mathbb{R}$. 
\end{theorem}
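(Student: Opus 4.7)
The plan is to verify the equivariance identity from the definition above by direct computation, using the operator composition law $L_{\xi\omega} = L_\xi L_{\omega}$ established in (\ref{eq:prop}). In our setting the input space $\mathbb{X}_1$ is $\mathbb{X} = S^1 \times \mathbb{R}$ and the output space $\mathbb{X}_2$ is $\mathcal{G}$; since $\mathcal{G}$ is isomorphic to $\mathbb{X}$, the induced action on functions on $\mathcal{G}$ is described by the same operator $L$ defined in (\ref{eq:operator}), and there is no distinction to track between $L_\omega$ and $L'_\omega$ beyond the space on which we evaluate.

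Starting from $[f \star h](\xi) = \langle L_\xi f, h\rangle$ as in (\ref{eq:corr}) and substituting $L_\omega f$ for $f$, I would obtain
\begin{equation*}
 [(L_\omega f) \star h](\xi) \;=\; \langle L_\xi (L_\omega f), h\rangle.
\end{equation*}
Collapsing the nested operators via (\ref{eq:prop}) gives
\begin{equation*}
 \langle L_\xi L_\omega f,\, h\rangle \;=\; \langle L_{\xi\omega} f,\, h\rangle \;=\; [f \star h](\xi\omega).
\end{equation*}
Because $\mathcal{G} = S^1 \times \mathbb{R}$ is abelian we have $\xi\omega = \omega\xi$, and by the very definition of the induced action on functions on $\mathcal{G}$ this is $[L_{\omega^{-1}}(f \star h)](\xi)$; relabelling $\omega \leftarrow \omega^{-1}$ (which ranges over all of $\mathcal{G}$) yields the equivariance relation demanded by the definition.

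The main obstacle is conceptual rather than technical: I have to be careful that the $\mathcal{G}$-action on the output function space is interpreted consistently with the action on the input space, and that the abelianness of $\mathcal{G}$ is what legitimately allows exchanging $\xi\omega$ with $\omega\xi$. As a sanity check I would also sketch an alternative derivation that avoids the composition law: by writing the correlation as the explicit integral in (\ref{eq:corr2}) and substituting via the left-invariance of the Haar measure $dx$ noted just after (\ref{eq:inner}), one can redistribute the shift $\omega$ between $f$, $h$, and the argument $\xi$, recovering the same conclusion from a different angle.
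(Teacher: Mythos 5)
Your algebra up to $[(L_\omega f)\star h](\xi) = [f\star h](\xi\omega)$ is correct, but the final step does not deliver the claimed conclusion. With the induced action $[L_\nu F](\xi)=F(\nu^{-1}\xi)$ from (\ref{eq:operator}), commutativity gives $[f\star h](\xi\omega)=[f\star h](\omega\xi)=[L_{\omega^{-1}}(f\star h)](\xi)$, so what you have actually proved is
\[
(L_\omega f)\star h \;=\; L_{\omega^{-1}}\,(f\star h),
\]
and relabelling $\omega\leftarrow\omega^{-1}$ merely rewrites this as $(L_{\omega^{-1}}f)\star h = L_\omega (f\star h)$; the inverse never disappears. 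This is a true intertwining relation, and because $S^1\times\mathbb{R}$ is abelian the assignment $\omega\mapsto L_{\omega^{-1}}$ is itself a legitimate left action, so one could defend it as an instance of the definition with $L'_\omega:=L_{\omega^{-1}}$. But the intended statement (and the paper's explicit remark that $L_\omega=L'_\omega$ here) is the covariant relation with the \emph{same} action on both sides, and your relation is not that statement. The source of the sign is that you transformed the argument that the correlation already translates (the first slot in $\langle L_\xi\,\cdot\,,h\rangle$), so the two operators compose on the same side, $L_\xi L_\omega=L_{\xi\omega}$, producing a right translation of the output rather than a left one.

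The paper instead perturbs the other argument: it computes $[h\star L_\omega f](\xi)=\langle L_\xi h, L_\omega f\rangle$ and uses unitarity of $L_\omega$ with respect to the Haar-measure inner product (\ref{eq:inner}) to move the transformation across, obtaining $\langle L_{\omega^{-1}}L_\xi h, f\rangle=\langle L_{\omega^{-1}\xi}h,f\rangle=[h\star f](\omega^{-1}\xi)=[L_\omega[h\star f]](\xi)$. This yields the covariant relation directly, with no inverse and no appeal to commutativity, so it also generalizes beyond abelian groups. Your closing ``sanity check'' via the explicit integral and the invariance of $dx$ is essentially this argument; if you promote it from a sketch to the main proof and apply the shift to the non-translated argument, you recover the paper's proof. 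As written, however, the main derivation stops one step short of the stated equivariance.
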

\begin{proof}
Given that the group $G$ of transformations on the cylinder surface is isomorphic to the set of points on the cylindrical manifold, we have: 
\begin{align*}
       [h \star L_{\omega} f](\xi)& \stackrel{\text{by definition in (\ref{eq:corr})}}{=}\\ \langle L_{\xi}h,&L_{\omega}f\rangle 
       = \langle L_{\omega^{-1}}L_{\xi}h,f\rangle \stackrel{\text{by (\ref{eq:prop})}}{=}\langle L_{\omega^{-1}\xi}h,f\rangle = [h \star f](\omega^{-1}\xi)\stackrel{\text{by (\ref{eq:operator})}}{=} [L_{\omega} [h \star f]](\xi). 
\end{align*}

\end{proof}

where $[h \star .](\xi)$ is the cross-correlation function, and $L_{\omega}$ is a transformation operator. Note that in our case $L_{\omega} = L_{\omega}'$ since the function resulting lies on the same space as the input functions. Equivariance can be represented graphically by commutative diagram as:
\begin{center}
\begin{tikzpicture}
  \node (A) {$~~~~~~~~f~~~~~~$};
  \node (B) [below=of A] {$[h \star f](\xi)$};
  \node (C) [right=of A] {$~~~~~~~L_{\omega}f~~~~~~~~$};
  \node (D) [right=of B] {$[L_{\omega} [h \star f]](\xi)$};
  \draw[-stealth] (A)-- node[left] {\small $[h \star .](\xi)$} (B);
  \draw[-stealth] (B)-- node [below] {\small $L_{\omega}$} (D);
  \draw[-stealth] (A)-- node [above] {\small $L_{\omega}$} (C);
  \draw[-stealth] (C)-- node [right] {\small $[h \star .](\xi)$} (D);
\end{tikzpicture}
\end{center}

In the next part we explain how to implement the circular-symmetric correlation layer efficiently using the notion of Fourier transform and cross-correlation theorem.

\subsection{Implementing CCL using FFT}
Computing cross-correlation and convolution using the Fast Fourier Transform (FFT) is known to be more efficient than their direct calculation. This is an important result of the Convolution (cross-correlation) theorem, according to which, the cross-correlation between two signals is equal to the product of the Fourier transform of one signal multiplied by complex conjugate of Fourier transform of the other signal, or mathematically,  $\widehat{f \ast g} = \hat{f}\;\odot\;\hat{g}$ where $\odot$ is element-wise product. In fact, the complexity of computing the FFT is $O(n\,\text{log}\,n)$ time and the product $\odot$ has $O(n)$ complexity. On the other hand, direct spatial implementation of the correlation is of complexity $O(n^2)$. Therefore, asymptotically it is beneficial to implement the CCL using FFT. Fourier transform is  a linear projection of a function onto a set of orthogonal basis functions. For the real line ($\mathbb{R}$) and the circle ($S^1$), these basis functions are the familiar complex exponentials $\text{exp}(\imath n\theta)$, where $\imath=\sqrt{-1}$.

The input of the FFT is the spatial signal $f$ on $\mathbb{X}$, sampled on a discrete grid of the cylindrical coordinate $(\varphi,z)$. This signal is periodic in $\varphi$ and finite along $z$. Therefore, the convolution theorem holds for the dimension which is equivalent to the unwrapped version of the $\varphi$ and the usage of FFT  for implementing the correlation in this dimension is appropriate. However, we do not have the same periodicity in the dimension $z$. Hence, we use another set of basis functions (cosine waves) and as a consequence we use discrete cosine transform (DCT) in the $z$ dimension. 

\textbf{Discrete Cosine Transform (DCT):}
The DCT is the most commonly used transformation technique in signal processing and by far the most widely used linear transform in lossy data compression applications such as the JPEG image format \citep{muchahary2015simplified}.
DCT are related to Fourier series coefficients of a periodically and \emph{symmetrically} extended sequence whereas FFT are related to Fourier series coefficients of a periodically extended sequence. 
The implicit periodicity of the FFT means that discontinuities usually occur at the boundaries of the signal. In contrast, a DCT where both boundaries are even always yields a continuous extension at the boundaries.
In particular, it is well known that any discontinuities in a function evokes higher frequencies and reduce the rate of convergence of the Fourier series, so that more sinusoids are needed to represent the function with a given accuracy. For this reason, a DCT transform uses cosine waves as its basis:
\begin{equation}
    F_k = \sum_{n=0}^{N-1}f_n \cos\Big(\frac{\pi}{N}(n+\frac{1}{2})k\Big), \quad k=0,1,\cdots,N-1
\end{equation}
where $F_k$ are the DCT coefficients of $f$. The use of cosine rather than sine basis in DCT stems from the boundary conditions implicit in the cosine functions.
In a more subtle fashion, the boundary conditions are responsible for the ``spectral compaction'' properties of DCT, since a signal's DCT representation tends to have more of its energy concentrated in a smaller number of coefficients. 
As shown in figure.~\ref{fig:my_label} for a $360^\circ-$panoramic image, by applying FFT along unwrapped $\varphi$ a circular symmetry is evoked along the horizontal axis, and by applying DCT along $z$ dimension a reflection symmetry is evoked along the vertical axis which implies smooth boundaries in both dimensions. We will show in the experiments that the use of DCT in this setting benefits overall performance of the deep learning module.


\begin{figure}[t]
    \centering
    \includegraphics[width=.6\textwidth]{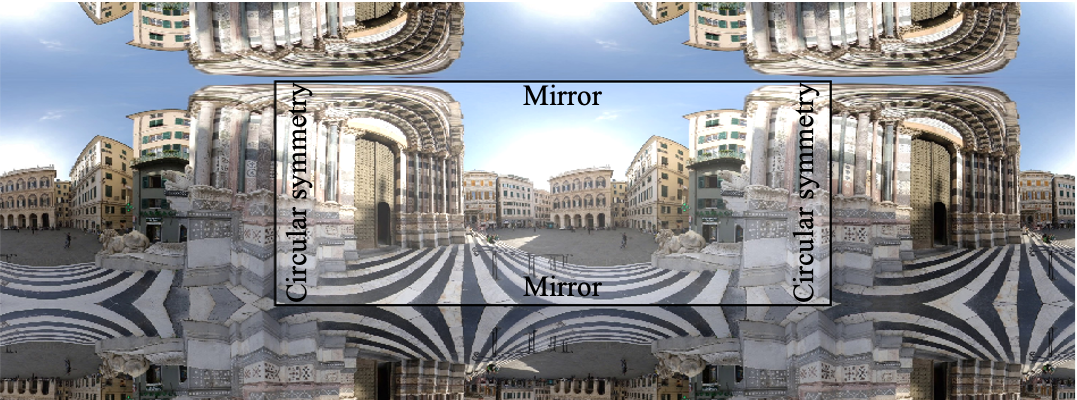}
    \caption{$360^\circ-$panoramic image with circular symmetry along the horizontal axis (unwrapped $\varphi$) and reflection symmetry along the vertical axis which are evoked by FFT and DCT, respectively.}
    \label{fig:my_label}
\end{figure}

\section{Experiments}

We begin our experiments by investigating the effect of discretizing the continuous convolution in (\ref{eq:corr2}). We then demonstrate the accuracy and effectiveness of the CCL layer in comparison with the standard convolution layer by evaluating it over a couple of well-known datasets, such as MNIST and CIFAR10. We then provide application examples for adopting CCL in designing neural networks, namely application to  3D object classification using a $360^\circ-$panoramic projection of the object on a cylindrical surface and application to change detection.       

\subsection{Discretization Error of Equivariant Layer}
In our attempt to design a group equivariant neural layer, we started by assuming the continuous group $S^1 \times \mathbb{R}$. For the purpose of implementation, it is needless to say that each function and group is discretized with a certain resolution. Therefore, as the result of discretization of the signal, the correlation kernel, and the continuous rotation group, the equivariance property does not exactly hold, i.e., we cannot exactly prove $[L_{\xi}f]\star g = L_{\xi}[f\star g]$. Furthermore, when using more than one layer, essentially a network, a moderate discretization error could propagate through the network. Therefore, it is necessary to examine the impact of discretization on the equivariance properties of the network. In other words, we want to see how this equality holds in different cases: $L_{\xi_n}\Phi(f_n) = \Phi(L_{\xi_n}f_n)$.          

\begin{figure}[h]
    \centering
    \includegraphics[width=.4\textwidth]{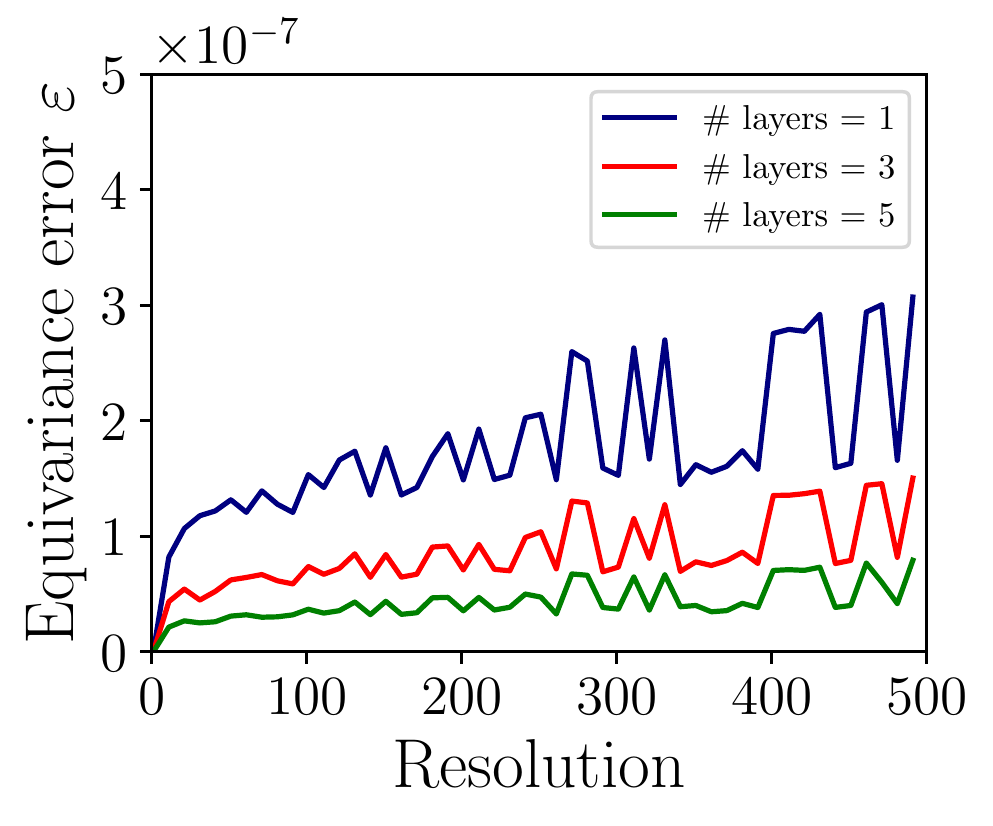}
    \caption{Equivariance approximate error as a function of resolution and number of layer.}
    \label{fig:discrete}
\end{figure}

To this aim, we test the equivariance of the CCL by sampling $N = 1000$ random input feature maps $f_n$ with $10$ input channels and for each feature map we sample a rotation $\varphi \in [0, 2\pi]$. To create and compare both side of the mentioned equivariance equality, we first shift the input feature map before passing it through the network, and then we also shift the output feature map of the intact input. We compute the discretization error as $ \varepsilon = \frac{1}{N}\sum_{n = 1}^{N} \text{std}\left(L_{\xi_n}\Phi(f_n) - \Phi(L_{\xi_n}f_n)\right)/\text{std}\left(\Phi(f_n)\right)$, where $\Phi(.)$ is a composition
of CCL correlation layers with randomly
initialized filters interleaved with relu non-linearity. In continuous case, we have perfect equivariance and $\varepsilon = 0$. However, as we can see in figure.~\ref{fig:discrete}, the approximation error is in the order of $10^-7$ and although it grows with the resolution, it decreases when we add more number of layers with relu. Furthermore, the increase rate with resolutions of the image seems to be quite low and saturating. In this figure, you can see the results for number of layers equal to 1, 3, 5.

\begin{figure}[t]
\hspace{-30pt}
    \subfloat[]{\includegraphics[height=4.1cm]{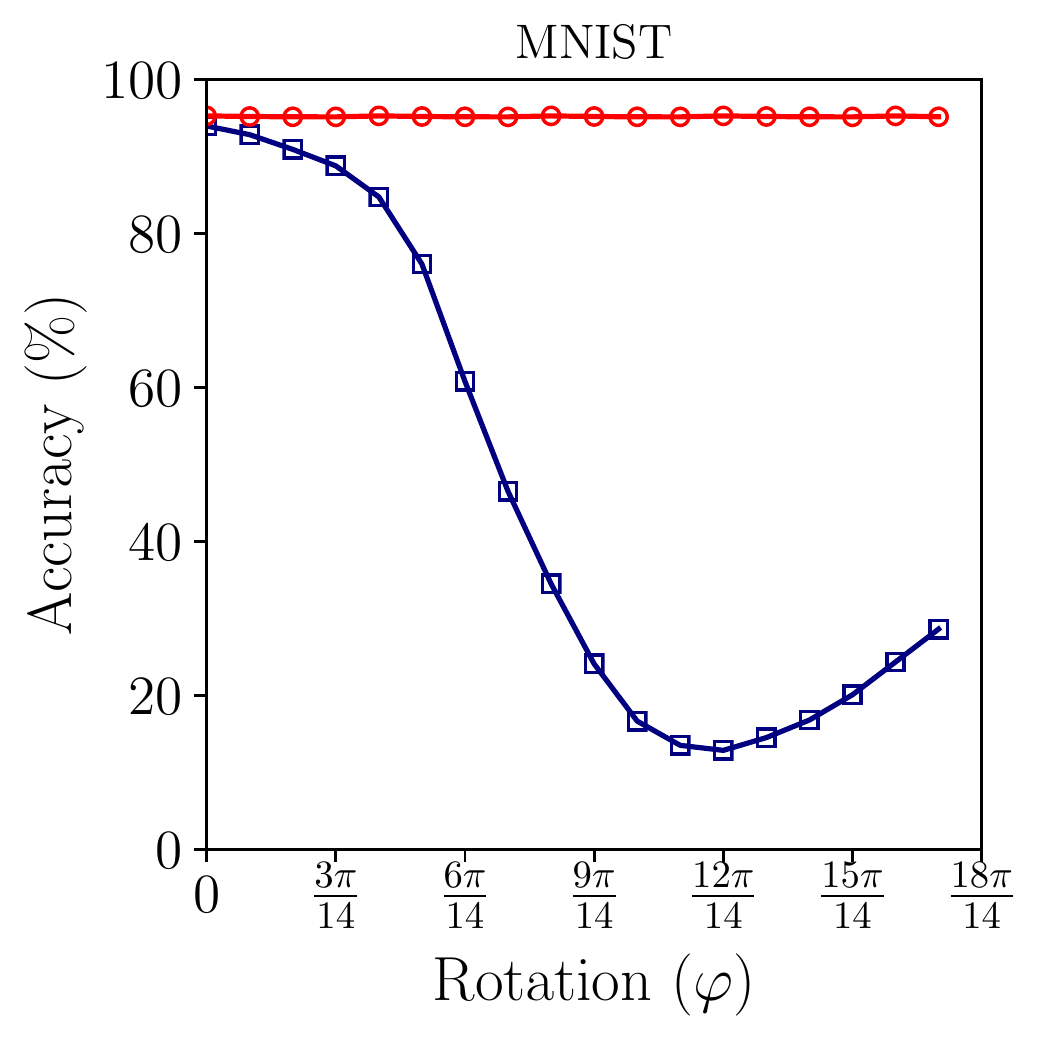}}
    \subfloat[]{\includegraphics[height=4.1cm]{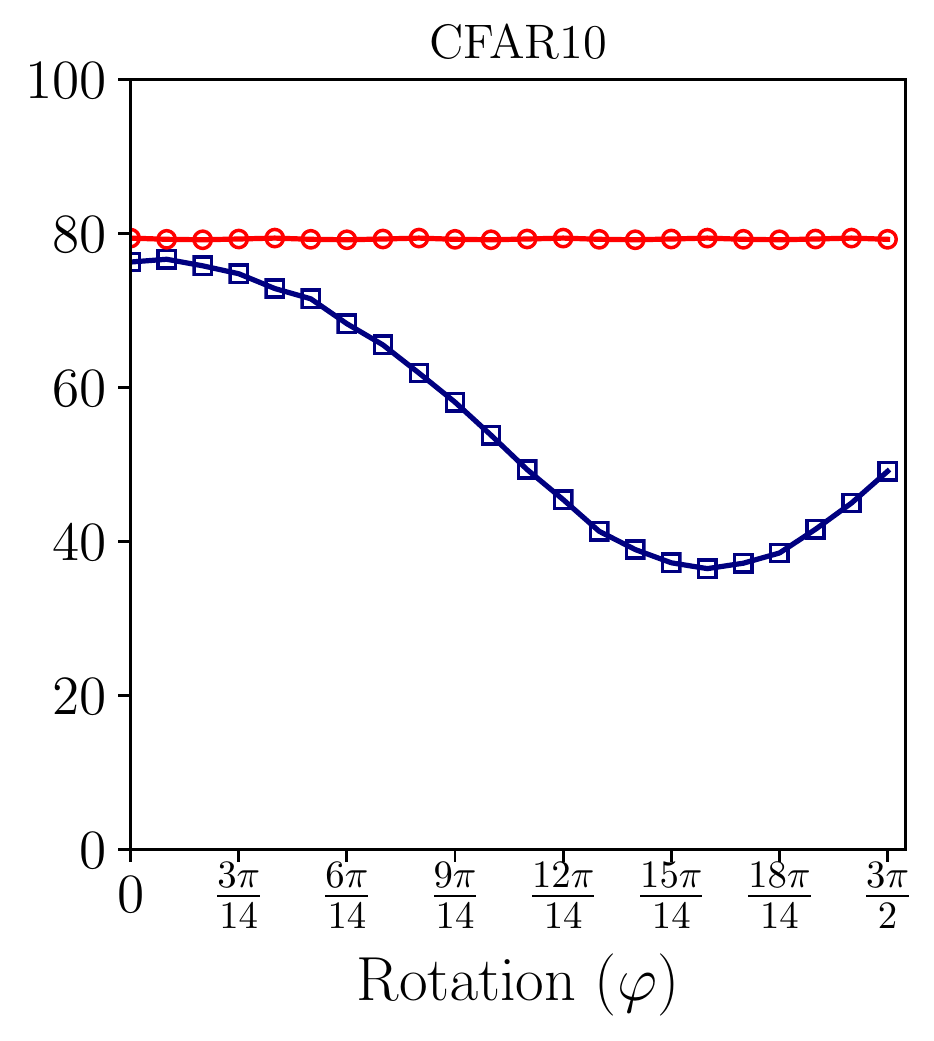}}
    \subfloat[]{\includegraphics[height=4.1cm]{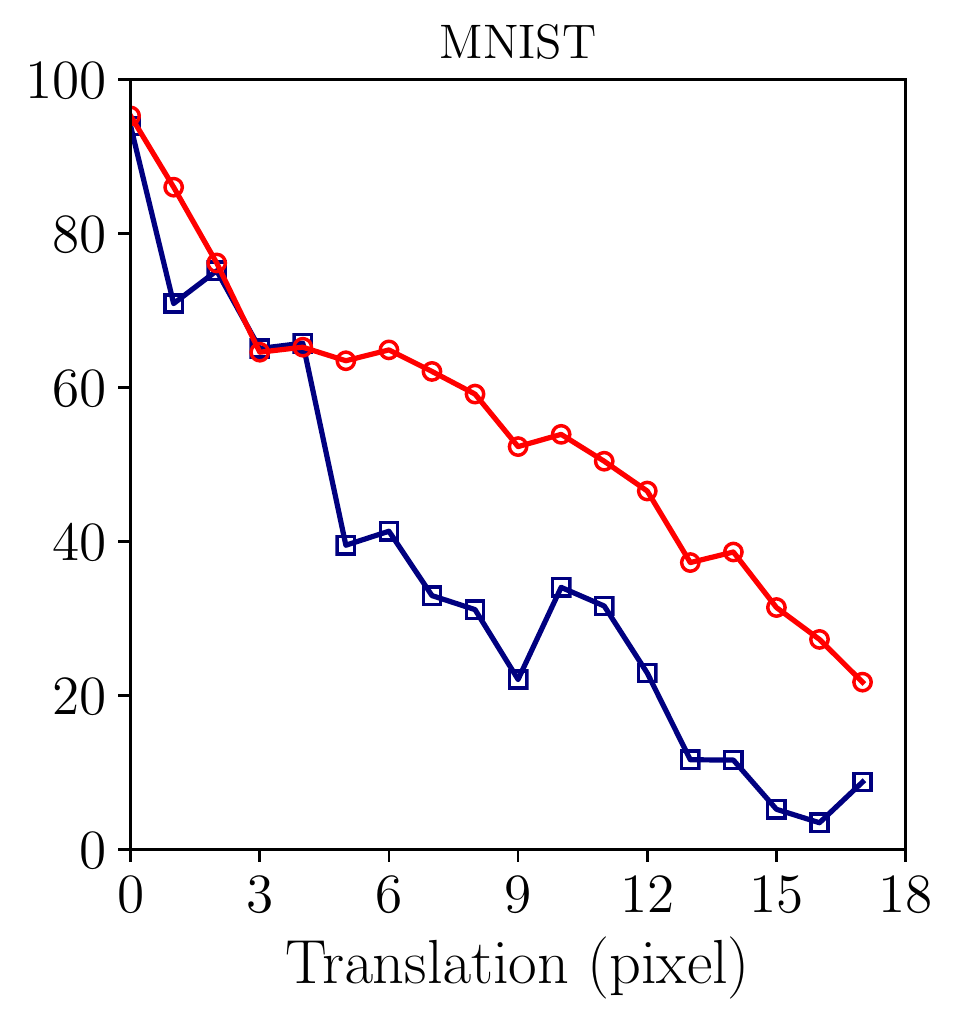}}
    \subfloat[]{\includegraphics[height=4.1cm]{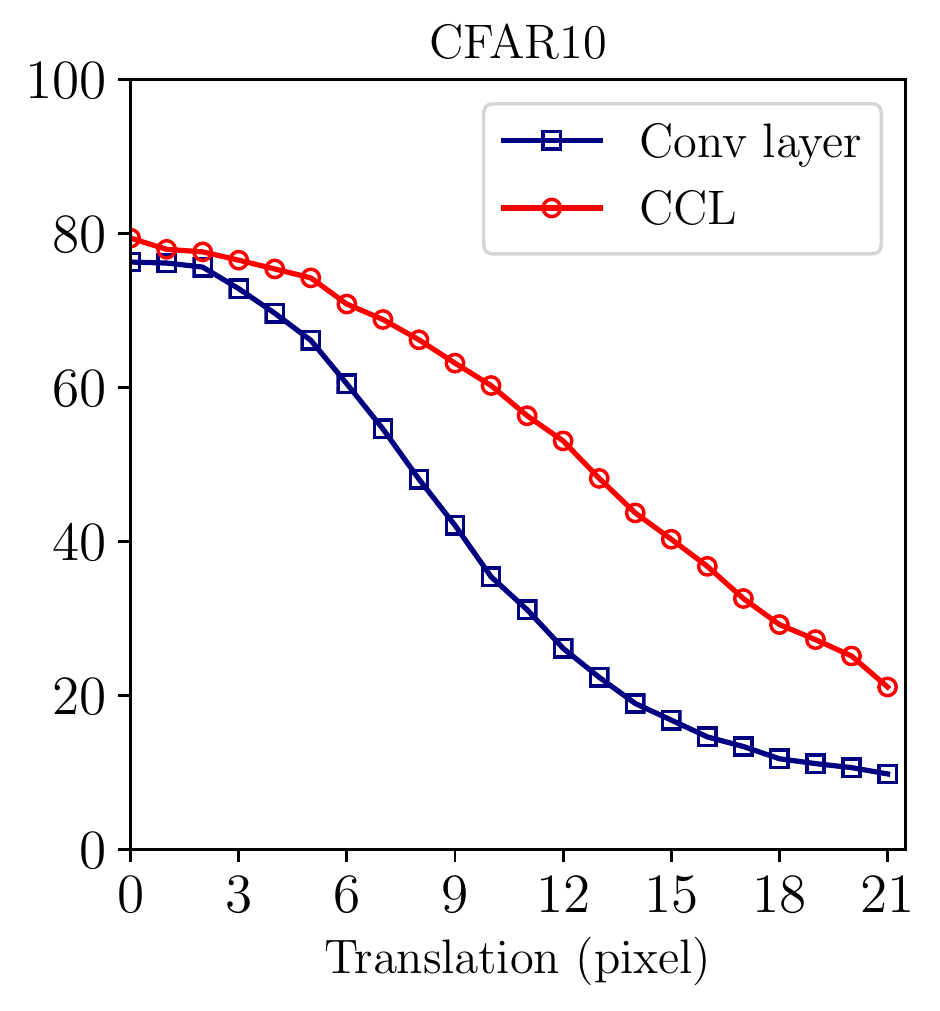}}
    \caption{Accuracy of the neural networks trained on MNIST and CIFAR10 and tested on the rolled and translated version. The two left figures show the accuracy performance of the models versus the rotation of the images around the principal axis ($z$-axis). The two right figures show the accuracy performance of the models versus translation along $z$-axis. We can see that the CCL layer is exactly equivariant with respect to $S^1$ and it demonstrate larger degree of equivariance compared to its counterpart (conv2d) with respect translation along $z$-axis.}
    \label{fig:acc}
\end{figure}

\textbf{Rolled MNIST and Rolled CIFAR10} We first evaluate the generalization performance a neural network equipped with CCL with respect to rotations of the input along $z$-axis. We propose a version of MNIST and CIFAR10 datasets called $\mathcal{R}$MNIST and $\mathcal{R}$CIFAR10, respectively, wrapped around a cylindrical surface as shown in figure.~\ref{fig:cylinder}. In this dataset, we augment the actual MNIST and CIFAR10 datasets with horizontally rolled version of the original images using random samples of  $\varphi \in [0, 2\pi]$ (see figure.~\ref{fig:cylinder}). Therefore, for a standard image size of $28\times28$, the rotation by $\nicefrac{\pi}{2}$ is equivalent to shifting the image horizontally by $\nicefrac{\pi}{2}\times\nicefrac{28}{2\pi}=7$. As the result, the images could be cut in the middle and destruct the consistency of the object in the figure, namely the digits in MNIST dataset, or the animal in CIFAR10 dataset. 
\begin{wrapfigure}[13]{r}{0.4\textwidth}
  \begin{center}
  \includegraphics[width=0.18\textwidth]{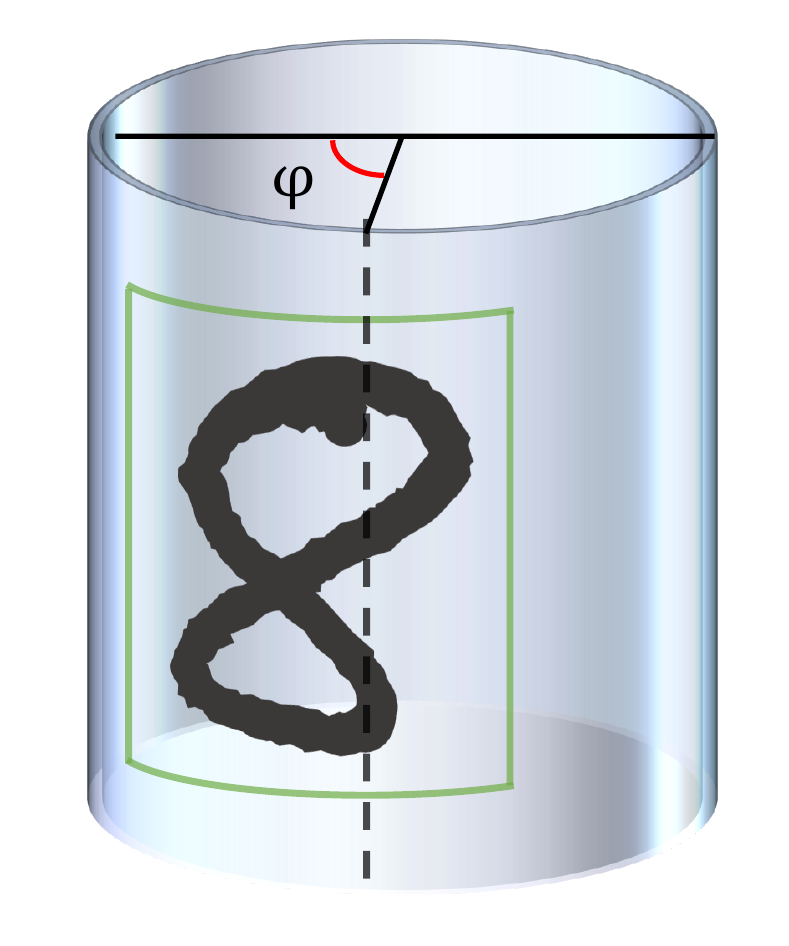}\includegraphics[width=0.19\textwidth]{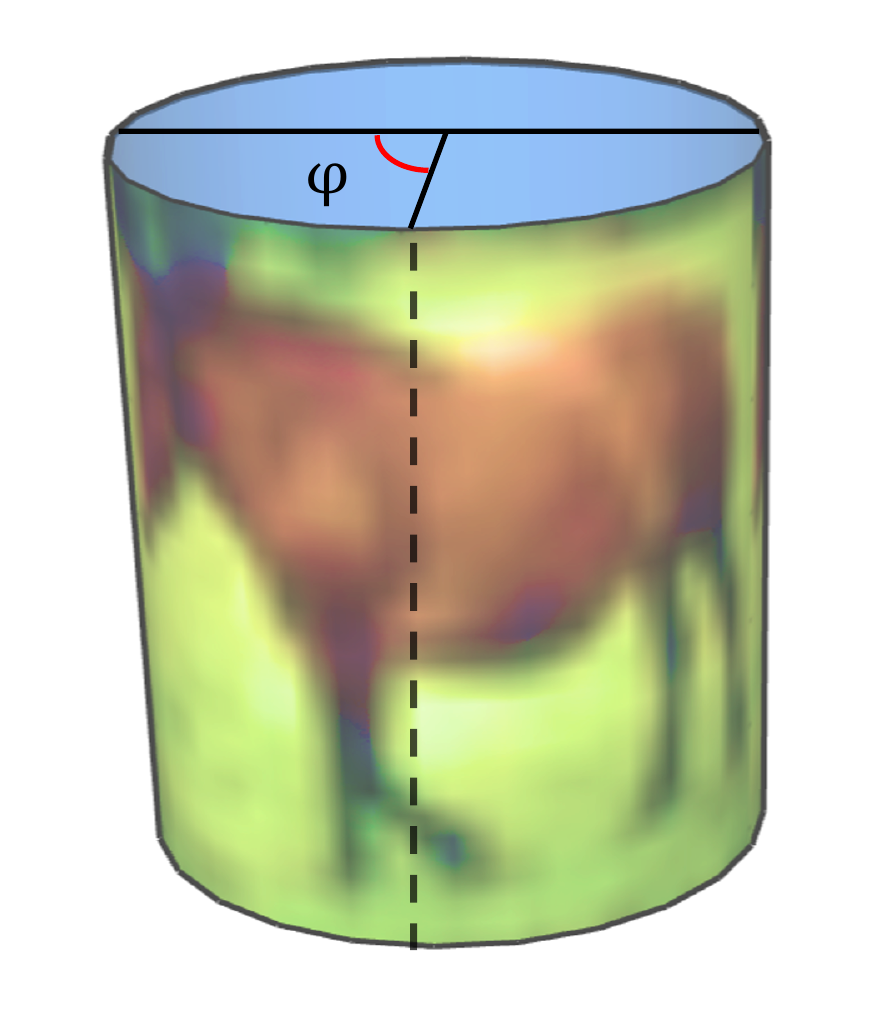}
  \end{center}
  \caption{$\mathcal{R}$MNIST and $\mathcal{R}$CIFAR10}
  \label{fig:cylinder}
\end{wrapfigure}
We perform three testing experiments using the actual datasets or their rolled versions and report the results in table.~\ref{tb:res}. As you can see, in the case of training with original MNIST and CIFAR10 and testing on the original MNIST and CIFAR10, the performance of a neural network using our CCL layer is comparable to its CNN counterpart although our network slightly outperforms. However, if we test these two neural networks, trained on the original MNIST, on the perturbed versions of the data, which are $\mathcal{R}$MNIST and $\mathcal{R}$CIFAR10, we see considerable performance drop in the case of CNN as CNNs are not equipped to deal with this degree of image translation. If we train both of these neural networks on the augmented versions of the two datasets, although the accuracy of the CNN  improves in comparison with its previous case, it is still considerably lower than a network using CCL. Furthermore, we should note that training with the augmented dataset is significantly slower than original dataset as it consists of several times more samples, i.e., for each rotation.  To see the adopted architectures refer to table.~\ref{tb:1}. To make the learned representations invariant to the rotation around the  $z$-axis a row-wise max-pooling layer
is used between the correlation and fully-connected layers (see \cite{lin2013network}).

\subsection{Invariance Analysis of Networks Built with CCL}
Here we show another set of results comparing the equivariance of neural networks adopting CCL layers and regular CNN. We adopt similar network architectures. To see the adopted architectures refer to table.~\ref{tb:1}. CCL($M$) means we used the CCL layer with output channel size of $M$. For the regular CNN we replace the CCL with regular conv2d layer and keep everything else the same.   Figure.~\ref{fig:acc} shows the accuracy of CCL neural network (red) and CNN (blue) trained on MNIST and CIFAR10 and tested on the rolled and translated version. The two left figures show the accuracy performance of the models versus different degrees of rotation of the images around the principal axis ($z$-axis). It is obvious that the CCL neural network trained only on the unperturbed data generalizes quite well in all the rotations of the test data, hence the flat red line. On the other hand, CNN performance drops as the rotation value increases up to the point the image start to roll back to its original position, hence the acute drop of the blue line. The two right figures show the accuracy performance of the models versus translation along $z$-axis. It is expected that for the finite signal along the $z$-axis the equivariance with respect to translations in this direction will be disturbed. Therefore, although the CCL layer is exactly equivariant with respect to $S^1$, it is not completely equivariant with respect to vertical translation. However networks equipped by our CCL layer demonstrate larger degree of equivariance compared to its counterpart (conv2d). This is the results of using DCT in implementing the CCL layer. Since DCT exploits the even/reflection symmetry of the images, objects remain more consistent along the upper and lower edges of the image (see figure.~\ref{fig:my_label}).

\subsection{Application to 3D object classification}
\begin{figure}[h]
\begin{center}
    \subfloat[]{\includegraphics[width=.25\textwidth]{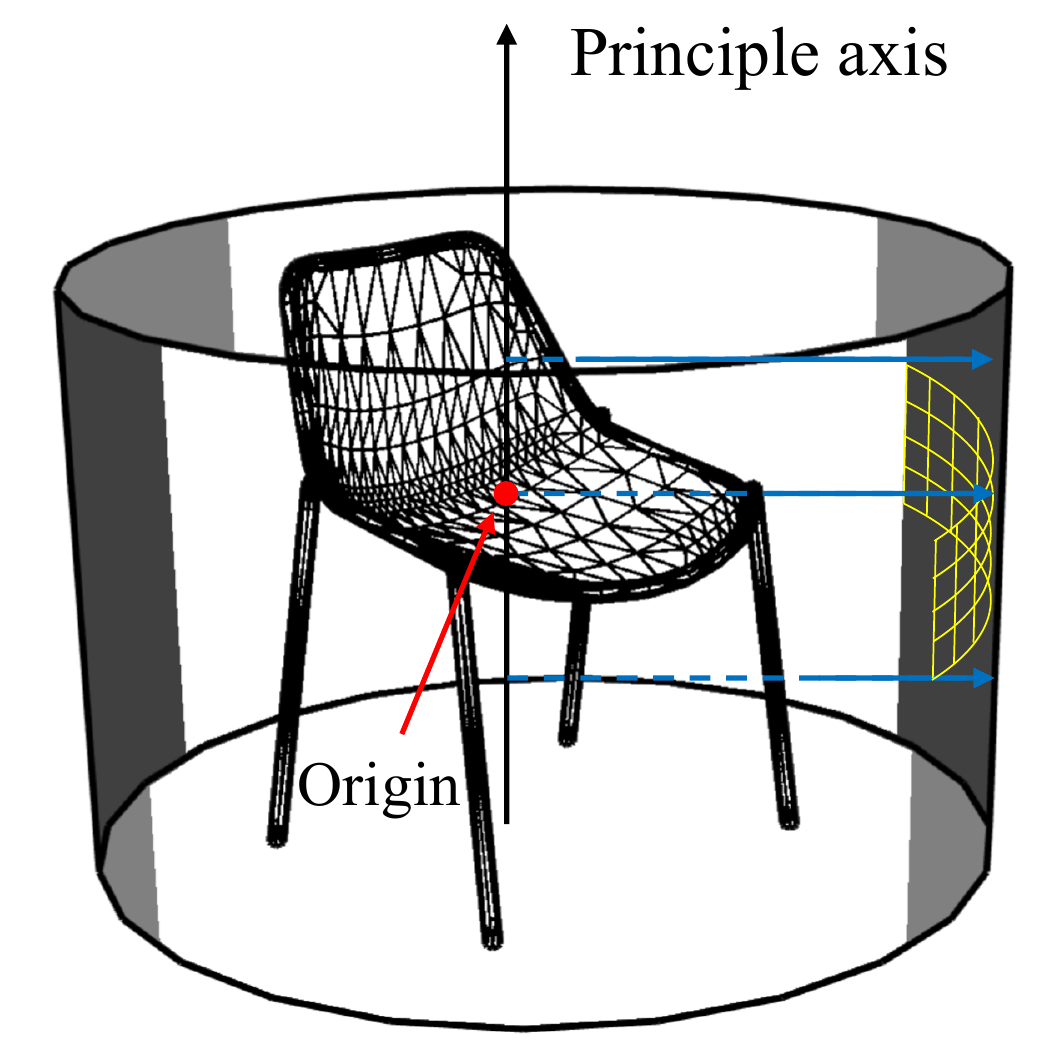}}\hspace{.5cm}
    \subfloat[]{\abovebaseline[10pt]{\includegraphics[width=.35\textwidth]{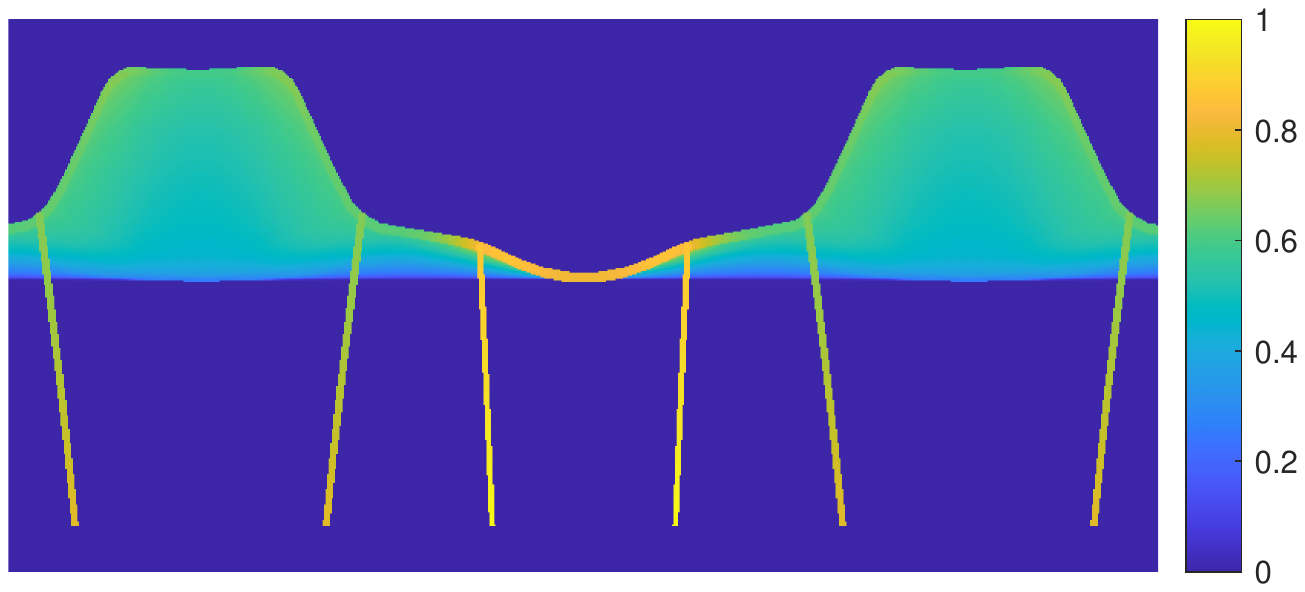}}}
\end{center}
    \caption{$360^\circ-$panoramic view of a 3D object: (a) We cast a ray from each pixel on the cylinder surface to the 3D object and measure the depth. (b) Depth measurements form a $360^\circ-$panoramic image.}
    \label{fig:raycast}
\end{figure}

We evaluate the effectiveness of our model in image-based 3D object classification, that is using multiple views (image) of a 3D object to perform the classification task. Here, we adopt a continuous panoramic views of a 3D object which describe the position of the object’s surface with respect to a cylindrical surface in 3D space \citep{yavartanoo2018spnet, sfikas2018ensemble,shi2015deeppano, papadakis2010panorama}. 
We use ShapeNetCore, a subset of the full ShapeNet dataset, \citep{chang2015shapenet}, with 51,300 unique 3D models that covers 55 common object categories. This data set consists of a regular dataset of consistently aligned 3D models and another dataset in which models that are perturbed by random rotations in 3D. For our study, we aim to use $360^\circ-$panoramic images with rotations are along a specific axis (e.g. $z$-axis). We construct this mentioned dataset by rolling the $360^\circ-$panoramic images in figure.~\ref{fig:raycast}(b) using random samples of $\varphi \in [0,2\pi]$. According to table.~\ref{tb:res}, the classification accuracy of our model is higher than that of CNN for this dataset. See table.~\ref{tb:1} for information regarding the architecture.

\begin{table}[h]
\centering
\setlength\tabcolsep{5pt}
\caption{Accuracy results for network using CCL and Conv2d layers.}
\begin{tabular}[t]{lccccccc}
\toprule

Train set
& MNIST 
& MNIST 
& $\mathcal{R}$MNIST 
& CIFAR10 
& CIFAR10 
& $\mathcal{R}$CIFAR10 
& 3D Object
\\
Test set 
& MNIST 
& $\mathcal{R}$MNIST 
& $\mathcal{R}$MNIST 
& CIFAR10 
& $\mathcal{R}$CIFAR10 
& $\mathcal{R}$CIFAR10 
& 3D Object
\\
\midrule
Conv2d 
& 93.96 
& 16.81 
& 46.64
& 76.29 
& 49.11  
& 66.07
& 68.08
\\
\textbf{Ours} 
& \textbf{95.27}  
& \textbf{95.15} 
& \textbf{95.35}
& \textbf{79.40}
& \textbf{79.24} 
& \textbf{79.02} 
& \textbf{82.14}
\\
\bottomrule
\end{tabular}
\label{tb:res}
\end{table}%

\begin{table}[!ht]
\caption{Network architectures. CCL$($c$_\textsc{out})$: c$_\textsc{out}$ implies number of output channels. $\text{FC}(l_\textsc{in}, l_\textsc{out})$: $l_\textsc{in}$ and $l_\textsc{in}$ imply input and output features dimensions, respectively. $\text{MaxPool}(k, s)$: $k$ and $s$ imply kernel and stride sizes, respectively.
$\text{AvgPool}(k)$: $k$ implies kernel sizes. $\text{BN}$ denotes batch normalization. Note that the global average pooling makes the network invariant to the input roll.
}
\centering
{\small
\setlength\tabcolsep{9pt}
\begin{tabular}{c|c|c|c}
\toprule
\backslashbox{Layer}{Dataset} &
MNIST
& CIFAR10 &
Panoramic ShapeNet
\\
\midrule
Input
& $f\in \mathbb{R}^{1\times 28\times28}$
& $f\in \mathbb{R}^{3\times 32\times32}$
& $f\in \mathbb{R}^{1\times 48\times100}$
\\ 
\midrule
1
& \text{CCL(8), ReLU}
& \text{CCL(128), ReLU}
& \text{CCL(64), BN, ReLU}
\\
2
& \text{CCL(8), ReLU}
& \text{CCL(128), ReLU}
& \text{MaxPool(2, 2)}
\\
3
& \text{MaxPool(2, 2)}
& \text{MaxPool(2, 2)}
& \text{CCL(64), BN, ReLU}
\\
4
& \text{CCL(8), ReLU}
& \text{CCL(128), ReLU}
& \text{CCL(128), BN, ReLU}
\\
5
& \text{CCL(8), ReLU}
& \text{CCL(256), ReLU}
& \text{MaxPool(2, 2)}
\\
5
& \text{MaxPool(2, 2)}
& \text{MaxPool(2, 2)}
& \text{CCL(256), BN, ReLU}
\\
6
& \text{CCL(10), ReLU}
& \text{AvgPool(8)}
& \text{AvgPool(300)}
\\
7
& \text{AvgPool(7), Softmax}
& \text{FC(256, 120), ReLU}
& \text{FC(256, 100), ReLU}
\\
8
& 
& \text{FC(120, 84), ReLU}
& \text{FC(100, 55), Softmax}
\\
9
&
& \text{FC(84, 10), Softmax}
&
\\
\bottomrule
\multicolumn{4}{l}{\small For regular CNN, the CCL layers are replaced with Conv2d layers.}
\end{tabular}
}
\label{tb:1}
\end{table}

\subsection{Application to change detection}

\begin{figure}
    \centering
    \includegraphics[width=\textwidth]{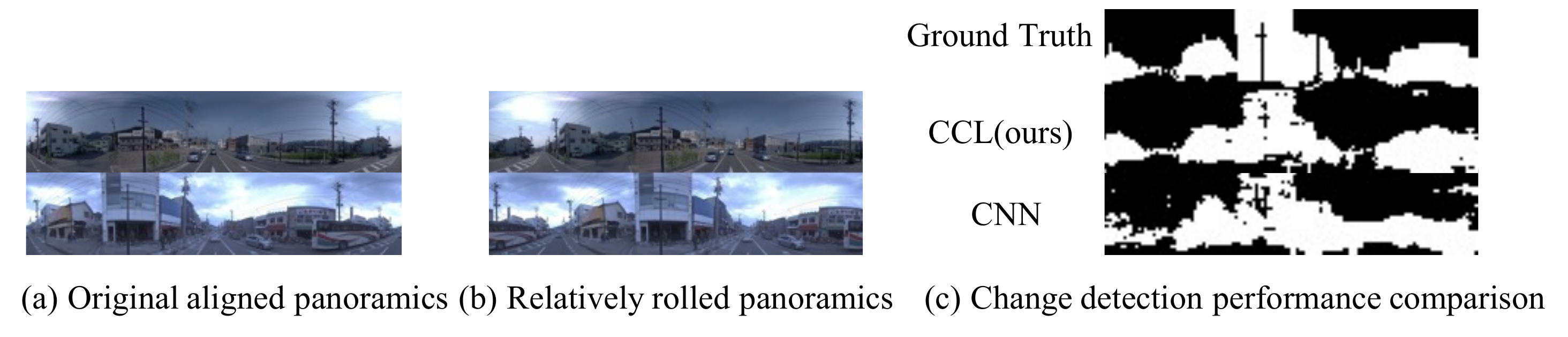}
    \caption{Change detection performance comparison for Tsunami dataset.}
    \label{fig:change}
\end{figure}
We further evaluated CCL in image-based detection of temporal scene changes. We used the Tsunami change detection dataset \citep{sakurada2015change} which consists of one hundred aligned panoramic image pairs of scenes in tsunami-damaged areas of Japan and the corresponding ground-truth mask of scene changes.
We generated a randomly rolled version of this dataset in which the first and second views are relatively rolled with respect to each other. 
We designed a neural network that is equivariant to the roll of the first view and invariant to the roll of the second view. 
The network consists of two CCL(50)-MaxPool-ReLU layers followed by two CCL(50)-Transpose-ReLU and a final CCL(1)-Sigmoid layer which regresses the change mask. The second layer is followed by a global AvgPool across feature maps of the second view (to make it invariant) and is then summed with feature maps of the first view. CCL-Transpose layer is implemented by up-sampling feature maps with zeros of appropriate stride. 
We tested our model against CNN version of the same architecture and achieved F1-score of 88.91\% compared to 76.46\% on the test set. As depicted in figure.~\ref{fig:change} our model generalizes better to the test set due to its equivariance to $S^1$.

\section{Discussion and Conclusion}
We  have proposed a Circular-symmetric Correlation Layer (CCL) based on the formalism of roto-translation equivariant correlation on the continuous group $S^1 \times \mathbb{R}$, and implement it efficiently using the well-known FFT and DCT algorithm. Our numerical results demonstrates the effectiveness and accuracy obtained from adopting CCL layer. A neural network equipped with CCL generalizes across rotations around the principle axis, and outperforms its CNN counterpart on competitive 3D Model Recognition. It is worthwhile to note that the achieved gain is not at the expense of increasing the number of parameters (by zero- or input-padding of the input data), or at the expense of data augmentation and hence longer training time and sample complexity. It is merely due to the intrinsic property the CCL layer in mimicking the circular symmetry and reflection symmetry in the data.


\newpage
{
\small
\bibliographystyle{abbrvnat}
\bibliography{references.bib}

\begin{thebibliography}{35}
\providecommand{\natexlab}[1]{#1}
\providecommand{\url}[1]{\texttt{#1}}
\expandafter\ifx\csname urlstyle\endcsname\relax
  \providecommand{\doi}[1]{doi: #1}\else
  \providecommand{\doi}{doi: \begingroup \urlstyle{rm}\Url}\fi

\bibitem[Chang et~al.(2015)Chang, Funkhouser, Guibas, Hanrahan, Huang, Li,
  Savarese, Savva, Song, Su, et~al.]{chang2015shapenet}
A.~X. Chang, T.~Funkhouser, L.~Guibas, P.~Hanrahan, Q.~Huang, Z.~Li,
  S.~Savarese, M.~Savva, S.~Song, H.~Su, et~al.
\newblock Shapenet: An information-rich 3d model repository.
\newblock \emph{arXiv preprint arXiv:1512.03012}, 2015.

\bibitem[Cohen and Welling(2016)]{cohen2016group}
T.~Cohen and M.~Welling.
\newblock Group equivariant convolutional networks.
\newblock In \emph{International conference on machine learning}, pages
  2990--2999, 2016.

\bibitem[Cohen and Welling(2017)]{cohen2016steerable}
T.~S. Cohen and M.~Welling.
\newblock Steerable cnns.
\newblock In \emph{International Conference on Learning Representations}, 2017.

\bibitem[Cohen et~al.(2018)Cohen, Geiger, K{\"o}hler, and
  Welling]{cohen2018spherical}
T.~S. Cohen, M.~Geiger, J.~K{\"o}hler, and M.~Welling.
\newblock Spherical cnns.
\newblock In \emph{International Conference on Learning Representations}, 2018.

\bibitem[Cohen et~al.(2019)Cohen, Geiger, and Weiler]{cohen2019general}
T.~S. Cohen, M.~Geiger, and M.~Weiler.
\newblock A general theory of equivariant cnns on homogeneous spaces.
\newblock \emph{In Advances in Neural Information Processing Systems
  (NeurIPS)}, 32, 2019.

\bibitem[Dieleman et~al.(2015)Dieleman, Willett, and
  Dambre]{dieleman2015rotation}
S.~Dieleman, K.~W. Willett, and J.~Dambre.
\newblock Rotation-invariant convolutional neural networks for galaxy
  morphology prediction.
\newblock \emph{Monthly notices of the royal astronomical society},
  450\penalty0 (2):\penalty0 1441--1459, 2015.

\bibitem[Dieleman et~al.(2016)Dieleman, De~Fauw, and
  Kavukcuoglu]{dieleman2016exploiting}
S.~Dieleman, J.~De~Fauw, and K.~Kavukcuoglu.
\newblock Exploiting cyclic symmetry in convolutional neural networks.
\newblock In \emph{International conference on machine learning}, pages
  1889--1898. PMLR, 2016.

\bibitem[Dym and Maron(2021)]{dym2020universality}
N.~Dym and H.~Maron.
\newblock On the universality of rotation equivariant point cloud networks.
\newblock In \emph{International Conference on Learning Representations}, 2021.

\bibitem[Gens and Domingos(2014)]{gens2014deep}
R.~Gens and P.~M. Domingos.
\newblock Deep symmetry networks.
\newblock \emph{Advances in neural information processing systems},
  27:\penalty0 2537--2545, 2014.

\bibitem[Goodfellow et~al.(2009)Goodfellow, Lee, Le, Saxe, and
  Ng]{goodfellow2009measuring}
I.~Goodfellow, H.~Lee, Q.~Le, A.~Saxe, and A.~Ng.
\newblock Measuring invariances in deep networks.
\newblock \emph{Advances in neural information processing systems},
  22:\penalty0 646--654, 2009.

\bibitem[Guttenberg et~al.(2016)Guttenberg, Virgo, Witkowski, Aoki, and
  Kanai]{guttenberg2016permutation}
N.~Guttenberg, N.~Virgo, O.~Witkowski, H.~Aoki, and R.~Kanai.
\newblock Permutation-equivariant neural networks applied to dynamics
  prediction.
\newblock \emph{arXiv preprint arXiv:1612.04530}, 2016.

\bibitem[He et~al.(2015)He, Zhang, Ren, and Sun]{he2015spatial}
K.~He, X.~Zhang, S.~Ren, and J.~Sun.
\newblock Spatial pyramid pooling in deep convolutional networks for visual
  recognition.
\newblock \emph{IEEE transactions on pattern analysis and machine
  intelligence}, 37\penalty0 (9):\penalty0 1904--1916, 2015.

\bibitem[Jaderberg et~al.(2015)Jaderberg, Simonyan, Zisserman, and
  Kavukcuoglu]{jaderberg2015spatial}
M.~Jaderberg, K.~Simonyan, A.~Zisserman, and K.~Kavukcuoglu.
\newblock Spatial transformer networks.
\newblock In \emph{Proceedings of the 28th International Conference on Neural
  Information Processing Systems}, pages 2017--2025, 2015.

\bibitem[Kim et~al.(2020)Kim, Jung, Kim, and Lee]{kim2020cycnn}
J.~Kim, W.~Jung, H.~Kim, and J.~Lee.
\newblock Cycnn: A rotation invariant cnn using polar mapping and cylindrical
  convolution layers.
\newblock \emph{arXiv preprint arXiv:2007.10588}, 2020.

\bibitem[Kingma and Ba(2014)]{kingma2014adam}
D.~P. Kingma and J.~Ba.
\newblock Adam: A method for stochastic optimization.
\newblock \emph{arXiv preprint arXiv:1412.6980}, 2014.

\bibitem[Kondor and Trivedi(2018)]{kondor2018generalization}
R.~Kondor and S.~Trivedi.
\newblock On the generalization of equivariance and convolution in neural
  networks to the action of compact groups.
\newblock In \emph{International Conference on Machine Learning}, pages
  2747--2755, 2018.

\bibitem[Lenc and Vedaldi(2015)]{lenc2015understanding}
K.~Lenc and A.~Vedaldi.
\newblock Understanding image representations by measuring their equivariance
  and equivalence.
\newblock In \emph{Proceedings of the IEEE conference on computer vision and
  pattern recognition}, pages 991--999, 2015.

\bibitem[Lin et~al.(2013)Lin, Chen, and Yan]{lin2013network}
M.~Lin, Q.~Chen, and S.~Yan.
\newblock Network in network.
\newblock \emph{arXiv preprint arXiv:1312.4400}, 2013.

\bibitem[Lo et~al.(2002)Lo, Li, Wang, Kinnard, and Freedman]{lo2002multiple}
S.-C.~B. Lo, H.~Li, Y.~Wang, L.~Kinnard, and M.~T. Freedman.
\newblock A multiple circular path convolution neural network system for
  detection of mammographic masses.
\newblock \emph{IEEE transactions on medical imaging}, 21\penalty0
  (2):\penalty0 150--158, 2002.

\bibitem[Maron et~al.(2019)Maron, Ben-Hamu, Shamir, and
  Lipman]{maron2018invariant}
H.~Maron, H.~Ben-Hamu, N.~Shamir, and Y.~Lipman.
\newblock Invariant and equivariant graph networks.
\newblock In \emph{International Conference on Learning Representations}, 2019.

\bibitem[Maron et~al.(2020)Maron, Litany, Chechik, and
  Fetaya]{maron2020learning}
H.~Maron, O.~Litany, G.~Chechik, and E.~Fetaya.
\newblock On learning sets of symmetric elements.
\newblock In \emph{International Conference on Machine Learning}, pages
  6734--6744. PMLR, 2020.

\bibitem[Muchahary et~al.(2015)Muchahary, Mondal, Parmar, Borah, and
  Majumder]{muchahary2015simplified}
D.~Muchahary, A.~J. Mondal, R.~S. Parmar, A.~D. Borah, and A.~Majumder.
\newblock A simplified design approach for efficient computation of dct.
\newblock In \emph{2015 Fifth International Conference on Communication Systems
  and Network Technologies}, pages 483--487. IEEE, 2015.

\bibitem[Olah(2014)]{olah2014groups}
C.~Olah.
\newblock Groups and group convolutions, 2014.
\newblock URL \url{https://colah.github.io/posts/2014-12-Groups-Convolution/}.

\bibitem[Papadakis et~al.(2010)Papadakis, Pratikakis, Theoharis, and
  Perantonis]{papadakis2010panorama}
P.~Papadakis, I.~Pratikakis, T.~Theoharis, and S.~Perantonis.
\newblock Panorama: A 3d shape descriptor based on panoramic views for
  unsupervised 3d object retrieval.
\newblock \emph{International Journal of Computer Vision}, 89\penalty0
  (2):\penalty0 177--192, 2010.

\bibitem[Paszke et~al.(2017)Paszke, Gross, Chintala, Chanan, Yang, DeVito, Lin,
  Desmaison, Antiga, and Lerer]{paszke2017automatic}
A.~Paszke, S.~Gross, S.~Chintala, G.~Chanan, E.~Yang, Z.~DeVito, Z.~Lin,
  A.~Desmaison, L.~Antiga, and A.~Lerer.
\newblock Automatic differentiation in pytorch.
\newblock 2017.

\bibitem[Ravanbakhsh et~al.(2016)Ravanbakhsh, Schneider, and
  Poczos]{ravanbakhsh2016deep}
S.~Ravanbakhsh, J.~Schneider, and B.~Poczos.
\newblock Deep learning with sets and point clouds.
\newblock \emph{arXiv preprint arXiv:1611.04500}, 2016.

\bibitem[Ravanbakhsh et~al.(2017)Ravanbakhsh, Schneider, and
  Poczos]{ravanbakhsh2017equivariance}
S.~Ravanbakhsh, J.~Schneider, and B.~Poczos.
\newblock Equivariance through parameter-sharing.
\newblock In \emph{International Conference on Machine Learning}, pages
  2892--2901. PMLR, 2017.

\bibitem[Sakurada and Okatani(2015)]{sakurada2015change}
K.~Sakurada and T.~Okatani.
\newblock Change detection from a street image pair using cnn features and
  superpixel segmentation.
\newblock In \emph{BMVC}, volume~61, pages 1--12, 2015.

\bibitem[Schmidt and Roth(2012)]{schmidt2012learning}
U.~Schmidt and S.~Roth.
\newblock Learning rotation-aware features: From invariant priors to
  equivariant descriptors.
\newblock In \emph{2012 IEEE Conference on Computer Vision and Pattern
  Recognition}, pages 2050--2057. IEEE, 2012.

\bibitem[Schubert et~al.(2019)Schubert, Neubert, P{\"o}schmann, and
  Protzel]{schubert2019circular}
S.~Schubert, P.~Neubert, J.~P{\"o}schmann, and P.~Protzel.
\newblock Circular convolutional neural networks for panoramic images and laser
  data.
\newblock In \emph{2019 IEEE Intelligent Vehicles Symposium (IV)}, pages
  653--660. IEEE, 2019.

\bibitem[Sfikas et~al.(2018)Sfikas, Pratikakis, and
  Theoharis]{sfikas2018ensemble}
K.~Sfikas, I.~Pratikakis, and T.~Theoharis.
\newblock Ensemble of panorama-based convolutional neural networks for 3d model
  classification and retrieval.
\newblock \emph{Computers \& Graphics}, 71:\penalty0 208--218, 2018.

\bibitem[Shi et~al.(2015)Shi, Bai, Zhou, and Bai]{shi2015deeppano}
B.~Shi, S.~Bai, Z.~Zhou, and X.~Bai.
\newblock Deeppano: Deep panoramic representation for 3-d shape recognition.
\newblock \emph{IEEE Signal Processing Letters}, 22\penalty0 (12):\penalty0
  2339--2343, 2015.

\bibitem[Worrall et~al.(2017)Worrall, Garbin, Turmukhambetov, and
  Brostow]{worrall2017harmonic}
D.~E. Worrall, S.~J. Garbin, D.~Turmukhambetov, and G.~J. Brostow.
\newblock Harmonic networks: Deep translation and rotation equivariance.
\newblock In \emph{Proceedings of the IEEE Conference on Computer Vision and
  Pattern Recognition}, pages 5028--5037, 2017.

\bibitem[Yavartanoo et~al.(2018)Yavartanoo, Kim, and Lee]{yavartanoo2018spnet}
M.~Yavartanoo, E.~Y. Kim, and K.~M. Lee.
\newblock Spnet: Deep 3d object classification and retrieval using
  stereographic projection.
\newblock In \emph{Asian Conference on Computer Vision}, pages 691--706.
  Springer, 2018.

\bibitem[Zaheer et~al.(2017)Zaheer, Kottur, Ravanbakhsh, Poczos, Salakhutdinov,
  and Smola]{zaheer2017deep}
M.~Zaheer, S.~Kottur, S.~Ravanbakhsh, B.~Poczos, R.~R. Salakhutdinov, and A.~J.
  Smola.
\newblock Deep sets.
\newblock In \emph{Advances in neural information processing systems}, pages
  3391--3401, 2017.

\end{thebibliography}
}

\newpage


\newpage
\pagenumbering{arabic} 
\setcounter{page}{1}
{\centering
{\LARGE\bf Supplementary Material\par}
}
\setcounter{section}{0}
\renewcommand{\thesection}{\Alph{section}}
\section{Background on Group Theory}
The formalism used in the paper is based on various concepts in group theory and abstract algebra. In this part, we aim to provide this key concepts, and their corresponding notations and definitions.

\paragraph{Symmetry:} A symmetry is a set of transformations applied to a structure. The transformations should preserve the properties of the structure. Generally it is also presumed that the transformations must be invertible, i.e. for each transformation there is another transformation, called its inverse, which reverses its effect.
\par Symmetry is thus can be stated mathematically as an operator acting on an object, where the defining feature is that the object remains unchanged. In other words, the object is invariant under the symmetry transformation. Symmetries are modeled by \textbf{Groups}.

\paragraph{Group:} Let $\mathcal{G}$ be a non-empty set with a binary operation defined as $\circ: \mathcal{G}\times\mathcal{G}\mapsto\mathcal{G}$. We call the pair $(\mathcal{G}; \circ)$ a group if it has the following properties: 
\begin{itemize}
    \item[] \textbf{(Closure)}: $\mathcal{G}$ is closed under its binary operation,
    \item[] \textbf{(Associativity axiom)}: the group operation is associative –i.e., $(g_1\circ g_2)\circ g_3 = g_1\circ (g_2\circ g_3)$ for $g_1, g_2, g_3 \in \mathcal{G}$,
    \item[] \textbf{(Identity axiom)}: there exists an identity $e \in \mathcal{G}$ such that $g\circ e = e\circ g = g$ for all $g \in \mathcal{G}$, 
    \item[] \textbf{(Inverse axiom)}: every element $g \in \mathcal{G}$ has an inverse $g^{-1} \in \mathcal{G}$, such that $g\circ g^{-1} = g^{-1}\circ g = e$.
\end{itemize}
\paragraph{Subgroup:}  A non-empty subset $\mathcal{H}$	of $\mathcal{G}$ is called a subgroup, if $\mathcal{H}$ is a group equipped with the same binary operation of as in $\mathcal{G}$. We  show this as $\mathcal{H}\leq\mathcal{G}$. $\mathcal{H}$ is called a proper subgroup of  if $\mathcal{H}\not=\mathcal{G}$ and we show it as $\mathcal{H}<\mathcal{G}$.

\paragraph{Group order:} 
The number of elements in a group $\mathcal{G}$ is called the \textbf{order} of $\mathcal{G}$ and is denoted $|\mathcal{G}|$. $\mathcal{G}$ is called a finite group if $|\mathcal{G}|<\infty$ and infinite otherwise.
 
 \paragraph{Group action:} We are interested on the way a group “acts” on the input
and output of a deep network. Function $\gamma:\mathcal{G}\times\mathbb{X}\to \mathbb{X}$ is the left action of group $\mathcal{G}$ on $\mathbf{x}$ iff I) 
$\gamma(e,\mathbf{x}) = \mathbf{x}$ and; II)$\gamma(g_1,\gamma(g_2,\mathbf{x})) = \gamma(g_1g_2,\mathbf{x})$.

\paragraph{Faithful $\mathcal{G}$-action:} $\mathcal{G}$-action is faithful iff two groups are isomorphic $\mathcal{G}\cong \mathcal{G}_{\mathbb{N}}$.

\paragraph{Normal subgroup:} For $\mathcal{H}$, a subgroup of a group $\mathcal{G}$, the similarity transformation of $\mathcal{H}$ by a fixed element $g$ in $\mathcal{G}$ not in $\mathcal{H}$ always gives a subgroup. If 
\[
gHg^{-1} = H
\]
for every element $g$ in $\mathcal{G}$, then $\mathcal{H}$ is said to be a normal subgroup of $\mathcal{G}$, written $\mathcal{H} \lhd \mathcal{G}$. Normal subgroups are also known as invariant subgroups or self-conjugate subgroup.

\paragraph{Homogeneous Space and Transitivity:} Transitivity is the property that taking any $x_0 \in \mathcal{X}$, any other $x\in \mathcal{X}$ can be reached by the action of some $g \in \mathcal{G}$, i.e., $x= g(x_0)$. If the action of $\mathcal{G}$ on $\mathcal{X}$ is \gls{transitive},
we say that $X$ is a \gls{homogeneous space} of $\mathcal{G}$.

\paragraph{Homomorphism:} Let $\mathcal{G}$ with binary operation $\circ$ and $\mathcal{H}$ with binary operation $\star$ be groups. The map $ \Phi: \mathcal{G} \rightarrow \mathcal{H}$ is called a homomorphism from $(\mathcal{G},\circ)$ to $(\mathcal{H},\star)$, if for all $g_1, g_2 \in \mathcal{G}$ we have:
\[
\Phi(g_1\circ g_2) = \Phi(g_1)\star \Phi(g_2).
\]

\par A homomorphism $ \Phi: \mathcal{G} \rightarrow \mathcal{H}$ is called

\begin{itemize}
    \item[] monomorphism if the map $\Phi$ is injective,
    \item[] epimorphism if the map $\Phi$  is surjective,
    \item[] isomorphism if the map $\Phi$ is bijective,
    \item[] endomorphism if $\mathcal{G} = \mathcal{H}$,
    \item[] automorphism if $\mathcal{G} = \mathcal{H}$ and the map $\Phi$ is bijective.
\end{itemize}

\paragraph{Isomorphic:} Two groups are isomorphic $\mathcal{G} \cong \mathcal{H}$ if there exists a bijection $ \Phi: \mathcal{G} \rightarrow \mathcal{H}$ between them.

\section{Code, Datasets, and Experimental Settings}
In this section, we explain the details of dataset preparation, code, and experimental settings.
\subsection{Datasets}
\textbf{Rolled MNIST and Rolled CIFAR10} We first evaluate the generalization performance a neural network equipped with CCL with respect to rotations of the input along $z$-axis. We propose a version of MNIST and CIFAR10 datasets called $\mathcal{R}$MNIST and $\mathcal{R}$CIFAR10, respectively, wrapped around a cylindrical surface as shown in figure.~\ref{fig:cylinder}. 

\begin{wrapfigure}[30]{r}{0.4\textwidth}
  \begin{center}
  \includegraphics[width=0.35\textwidth]{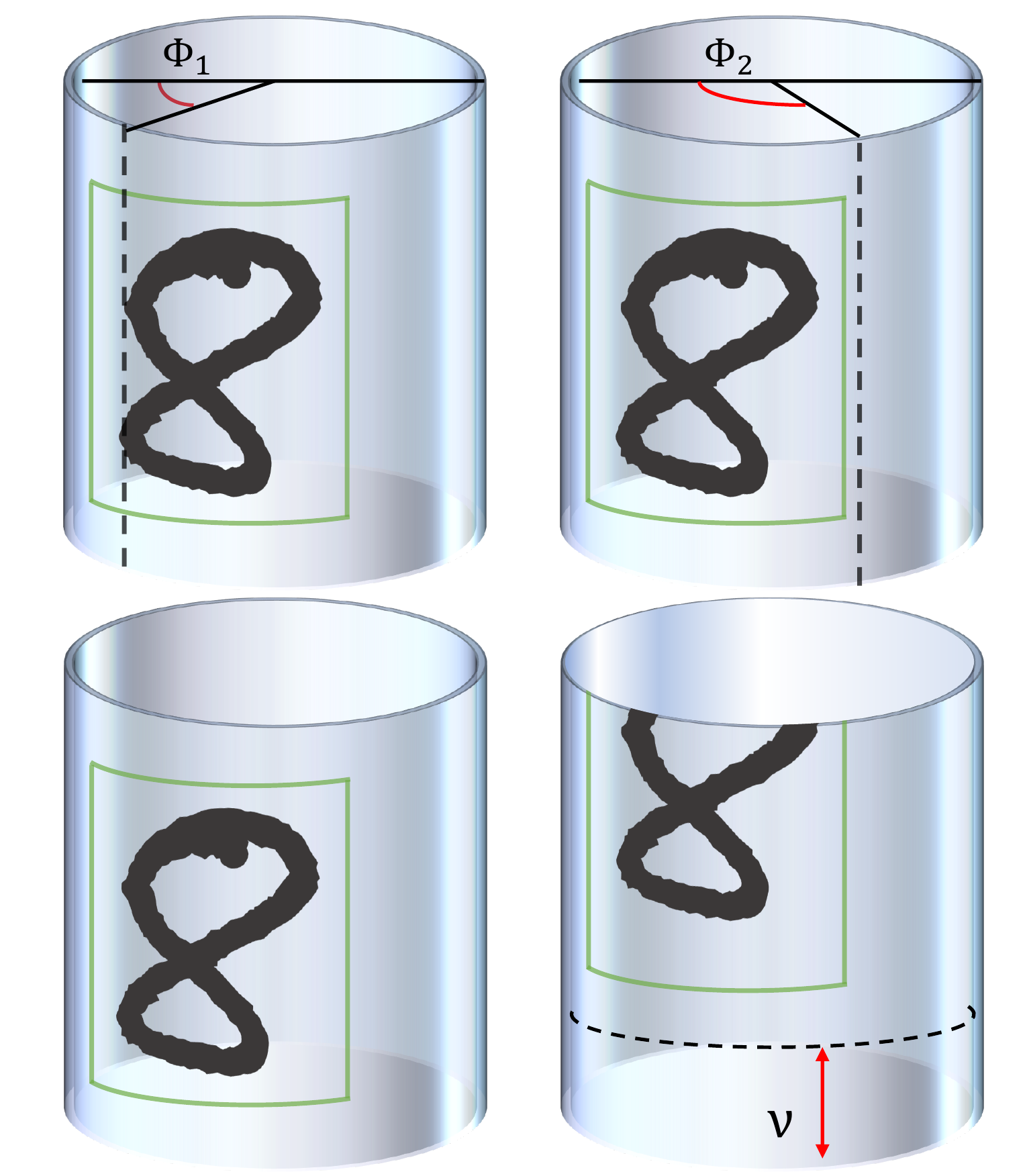}
  \end{center}
  \caption{$\mathcal{R}$MNIST. For the dataset to be representative of all the define transformations mentioned in the paper, namely, rotation around the $z$-axis and translation along the  $z$-axis, we randomly generated the discretised rolls ($\varphi_i\in [0,2\pi]$ with step size of $1/28$).\textbf{bottom-left}: original cylindrical image. \textbf{bottom-right}: The image is translated up by $\nu$. \textbf{up-left}: panoramic is rolled by $\varphi_1$ (the image is cut in the middle). \textbf{up-right}: panoramic is rolled by $\varphi_2$.  }
  \label{fig:suppcy}
\end{wrapfigure}
In this dataset, we augment the actual MNIST and CIFAR10 datasets with horizontally rolled version of the original images using random samples of  $\varphi \in [0, 2\pi]$ (see figure.~\ref{fig:cylinder}). Therefore, for a standard image size of $28\times28$, the rotation by $\nicefrac{\pi}{2}$ is equivalent to shifting the image horizontally by $\nicefrac{\pi}{2}\times\nicefrac{28}{2\pi}=7$. As the result, the images could be cut in the middle and destruct the consistency of the object in the figure, namely the digits in MNIST dataset, or the animal in CIFAR10 dataset. 

\par 
For the dataset to be representative of all the define transformations mentioned in the paper, namely, rotation around the $z$-axis and translation along the  $z$-axis, we randomly generated the discretised rolls ($\varphi_i\in [0,2\pi]$ with step size of $1/28$). As illustrated in Figure.~\ref{fig:suppcy}(\textbf{bottom-left}: original cylindrical image. \textbf{bottom-right}: The image is translated up by $\nu$. \textbf{up-left}: panoramic is rolled by $\varphi_1$ (the image is cut in the middle). \textbf{up-right}: panoramic is rolled by $\varphi_2$) this transformations will disturb the perception neural network.
\par
\textbf{Panoramic Change Detection Dataset:} TSUNAMI dataset,  \cite{sakurada2015change}, consists of one hundred panoramic image pairs of scenes in tsunami-damaged areas of Japan. The size of these images is $224 \times 1024$ pixels. For each image, they hand-labeled the ground truth of scene changes. It is given in the form of binary image of the same size as the input pair of images. The binary value at each pixel is indicative  of the change that occurred at the corresponding scene point on the paired images. The scene changes are defined to be detected as 2D changes of surfaces of objects (e.g., changes of the advertising board) and 3D, structural changes (e.g., emergence/vanishing of buildings and cars). The changes due to differences in illumination and photographing condition and those of the sky and the ground are excluded, such as changes due to specular reflection on building windows and changes of cloud and signs on the road surface. For the ground-truth, all image pairs have ground truths of temporal scene changes, which are manually obtained by the authors in \cite{sakurada2015change}.

\begin{figure}[!h]
    \centering
    \includegraphics[width=0.9\textwidth]{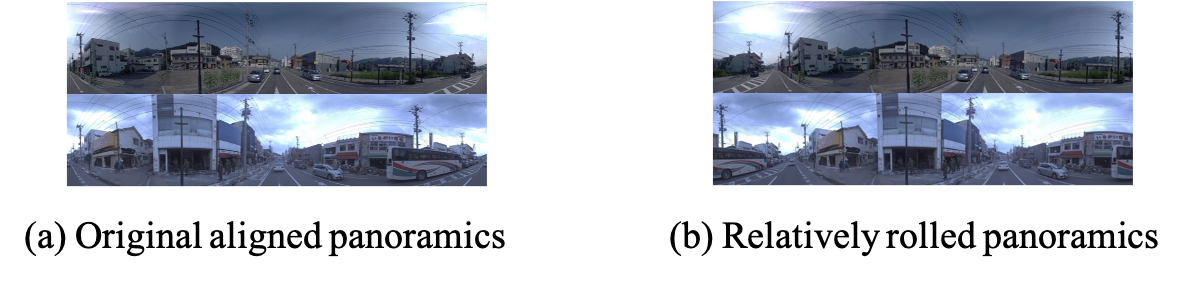}
    \caption{TSUNAMI dataset. (a) The original dataset with the two panoramic images aligned at the time of capture. (b) The dataset we built based on the original one to roll/shift one of the panoramic image relative to the other.}
    \label{fig:suppDetec}
\end{figure}

\par The original dataset is captured in a way that the two panoramic images to be aligned completely as it can be seen in Figure.~\ref{fig:suppDetec}. In order to make the dataset more challenging and realistic, and  to demonstrate the power of using CCL on this dataset, we built a variation of the dataset based on the original one by rolling/shifting one of the panoramic image relative to the other. In this way the network needs to be invariant to these rolls in the images in order to perform the task.   

\subsection{Network architecture for change detection problem}

\begin{table}[!h]
\caption{Network architectures. CCL$($c$_\textsc{out})$: c$_\textsc{out}$ implies number of output channels. $\text{FC}(l_\textsc{in}, l_\textsc{out})$: $l_\textsc{in}$ and $l_\textsc{in}$ imply input and output features dimensions, respectively. $\text{MaxPool}(k, s)$: $k$ and $s$ imply kernel and stride sizes, respectively.
$\text{AvgPool}(k)$: $k$ implies kernel sizes. $\text{BN}$ denotes batch normalization. $\text{UpSample}(n_1,n_2)$ up-samples 2D dimensions of the input with stride $n_1$ and $n_2$. Note that the global average pooling makes the network invariant to the input roll.}
\centering
{\small
\setlength\tabcolsep{3pt}
\begin{tabular}{c|c}
\toprule
\backslashbox{Layer}{Dataset} &
Panoramic Change Detection TSUNAMI Dataset
\\
\midrule
Input
& $f\in \mathbb{R}^{1\times 28\times28}$

\\ 
\midrule
1
& \text{CCL(50), ReLU}

\\
2
& \text{MaxPool(2, 4)}

\\
3
& \text{CCL(50), ReLU}

\\
4
& \text{MaxPool(2, 4)}

\\
5
& \text{UpSample(2,4)}

\\
6
& \text{CCL(1), Sigmoid}

\\

\bottomrule
\multicolumn{2}{l}{\small For regular CNN, the CCL layers are replaced with Conv2d layers.}
\end{tabular}
}
\label{tb:supp}
\end{table}

\subsection{Code and Computational Resources}
We implemented CCL and performed our experiemnts in PyTorch v1.8 \citep{paszke2017automatic} and used the Adam optimizer \citep{kingma2014adam} with learning rate of $0.001$. We initialized all parameters randomly. Each CCL layer has $C_{out}\times C_{in} \times K_1 \times K_2$ parameters similar to that of a CNN layer. The
time complexity for training a CCL layer is $O(C_{in}\times C_{out}\times N \times \log(N))$ per data sample and epoch, where $N$ is the dimension of input data, and the space complexity is $O(b\times C_{in} \times C_{out} \times N)$ where $b$ is the batch size.
We learned and tested all the models on an Intel Core i9 CPU@3.6GHz with 64 GB of RAM and Nvidia GeForce RTX 2080 with 11 GB of RAM . Per-epoch training time varied from $30$ms in smaller datasets to $1$s in larger experiments and $25$ epochs sufficed for all experiments.

{
\small
}



\end{document}